\documentclass{amsart}

\usepackage{graphicx}%
\usepackage{multirow}%
\usepackage{amsmath,amssymb,amsfonts}%
\usepackage{amsthm}%
\usepackage{mathrsfs}%
\usepackage[title]{appendix}%
\usepackage{xcolor}%
\usepackage{textcomp}%
\usepackage{manyfoot}%
\usepackage{booktabs}%
\usepackage{algorithm}%
\usepackage{algorithmicx}%
\usepackage{algpseudocode}%
\usepackage{listings}%

\usepackage[colorlinks=true]{hyperref}
\usepackage{cleveref}
\usepackage[msc-links, numeric]{amsrefs}

\usepackage{booktabs}
\usepackage{subcaption}
\usepackage{tikz}
\usepackage{array}
\usepackage{url}
\usetikzlibrary{positioning}

\author[Yerrapati]{Venkata Subbaiah Yerrapati}
\address{(Yerrapati) Department of Mathematics, S. V. National Institute of Technology, Surat-7, Gujarat, India}
\email{yvsmath@gmail.com}
\thanks{The first author would like to thank the Department of Education, Government of India, for the financial assistance.}

\author[Dixit]{Rahul Dixit}
\address{(Dixit) Department of Artificial Intelligence, S. V. National Institute of Technology, Surat-7, Gujarat, India}
\email{rahuldixit@aid.svnit.ac.in}

\author[Shukla]{Ajay Kumar Shukla}
\address{(Shukla) Department of Mathematics, S. V. National Institute of Technology, Surat-7, Gujarat, India}
\email{aks@amhd.svnit.ac.in}

\theoremstyle{plain}
\newtheorem{theorem}{Theorem}[section] 
\newtheorem{lemma}[theorem]{Lemma} 

\theoremstyle{definition}
\newtheorem{definition}[theorem]{Definition} 

\theoremstyle{remark}
\newtheorem{remark}[theorem]{Remark} 

\begin{document}

    \title[Neural Ideals and Neural Codes]{Neural Ideals and Neural Codes: An Algebraic Framework for Neural Network Classification and Feature Interpretation}
    

    
    \begin{abstract}
        Understanding the features captured by the hidden layers of neural networks is a fundamental challenge in machine learning, despite their widespread success across various classification problems. In this work, we propose an algebraic framework for examining neural networks that model classification problems. Certain results, such as the correspondence between the neural network and neural ideals, algorithms for computing the neural ideals, and a stabilization theorem that enables approximation of the neural ideals, are first established. As an application to the framework, we present algorithms to identify and interpret the features captured by each hidden-layer neuron. Along with these theoretical developments, the practical performance has been demonstrated on the MNIST digit dataset, and the results highlight the pivotal role of neural ideals as a mathematical and computational tool for analyzing the features captured by neural networks. Further, we develop an interactive software that builds on the presented framework to visualize the features captured by each neuron. This tool is available at \url{https://github.com/yvs1967/neural-network-representation-explorer/}
    \end{abstract}

    \keywords{Neural Ideals, Neural codes, Deep Learning Interpretability, Feature Interpretation}
    
    
    \subjclass[2020]{13P25, 68T07}
    \date{\today}
    \maketitle

    \section{Introduction}
    
    Artificial neural networks \cite{McCulloch1943} have emerged as one of the most successful computational paradigms for addressing complex classification problems arising in computer vision, natural language processing, scientific computing, and related areas \cite{LeCun2015}. Despite their empirical success, the mathematical understanding of the internal representations learned by such models remains limited \cite{suh2025survey}. The classification mechanism of a trained neural network is typically governed by compositions of affine transformations and nonlinear activation functions acting in high-dimensional spaces \cite{LeCun2015}. Although these architectures achieve remarkable predictive performance, the relationship between hidden-layer activations and the final classification decision is often difficult to characterize mathematically \cite{papyan2020prevalence}. Consequently, the development of rigorous mathematical frameworks for analysing the internal structure and decision-making process of neural networks has become an important research direction \cite{suh2025survey, papyan2020prevalence}.
    
    One possible approach to this problem arises from the interaction of commutative algebra, algebraic geometry, and data science. When an input is propagated through a feedforward neural network, the hidden neurons produce a pattern of activations that may be viewed as a binary firing configuration. Similar binary activation patterns appear in mathematical neuroscience, where they are studied through the theory of combinatorial neural codes. In this setting, Curto \emph{et al.} \cite{Curto2013} introduced neural rings and neural ideals as algebraic objects associated with neural codes. Given a neural code $C \subseteq \mathbb{F}_2^m$, the corresponding neural ideal $J_C$ encodes the combinatorial structure of $C$ within a Boolean polynomial ring. These algebraic constructions have proved useful in the study of biological neural systems and stimulus--response relationships. However, their application to artificial neural networks and modern classification architectures has received comparatively little attention.
    
    The objective of the present work is to develop an algebraic framework that connects neural network classification with the theory of neural ideals. We associate to each target class a collection of binary activation patterns obtained from the hidden neurons of the network and introduce the notion of an \emph{allowed code space}. This construction enables the translation of activation information into the Boolean quotient ring
    \[
    R=\mathbb{F}_2[x_1,\ldots,x_m]/\langle x_i^2-x_i \;|\; 1\leq i\leq m\rangle.
    \]
    Using pseudo-monomial generators, we construct class-specific neural ideals whose algebraic structure reflects the admissible activation patterns associated with a given classification outcome.
    
    The proposed formulation establishes a correspondence between neural network classification and polynomial ideal theory. In particular, the classification behaviour of a trained network is represented through Boolean polynomial relations determined by the associated neural ideal. We further investigate the computational aspects of constructing these ideals from finite collections of training samples and establish a stabilization result showing that the resulting ideals become invariant after the inclusion of sufficiently many samples.
    
    The main contributions of this work are summarized as follows:
    
    \begin{enumerate}
    \item We introduce the notion of an \emph{allowed code space} associated with a target class of a trained feedforward neural network.
    
    \item We adapt the theory of neural ideals to artificial neural network classification and construct class-specific neural ideals from hidden-layer activation patterns.
    
    \item We establish a classification--ideal correspondence that characterizes class membership through the generators of the associated neural ideal.
    
    \item We develop algorithms for constructing threshold maps, computing neural codes, determining allowed code spaces, and deriving neural ideals from neural network architectures.
    
    \item We prove a stabilization theorem that guarantees the eventual invariance of the neural ideals obtained from sufficiently large collections of training samples.
    
    \item We demonstrate the applicability of the proposed framework through the XOR classification problem and the MNIST handwritten digit dataset, illustrating its usefulness for analysing hidden representations and feature structures.
    
    \item We develop an interactive software framework for neural-code generation, allowed code-space construction, neural-ideal analysis, and visualization of learned neural representations.
    
    \end{enumerate}
    
    The remainder of the paper is organized as follows. Section~\ref{sec:prelim} reviews the necessary background on feedforward neural networks, Boolean polynomial rings, and neural ideals. Section~\ref{main_results} contains the principal theoretical results together with the proposed algorithms. Section~\ref{sec:applications} presents applications of the framework to the XOR classification problem and the MNIST handwritten digit dataset.

    \section{Preliminaries}\label{sec:prelim}
        In this section, we review the definitions and results required for later sections. Basic concepts are taken from \cite{zbMATH08074064}, while a detailed treatment of neural rings and neural ideals, including relevant theorems and proofs, can be found in \cite{Curto2013}.
        
        \begin{definition}[Activation Function]
            Let $n$ be a positive integer. Any function $\sigma: \mathbb{R} \to \mathbb{R}$ is called an \emph{activation function} and the associated \emph{coordinate-wise activation function} $\sigma^n : \mathbb{R}^n \to \mathbb{R}^n$ is defined by $\sigma^n(x_1, x_2, \ldots, x_n) = \left(\sigma(x_1), \sigma(x_2), \ldots, \sigma(x_n)\right)$.
        \end{definition}
        
        \begin{definition}[Feedforward Neural Network]
            Let $n_0, n_1, \ldots, n_L$ be positive integers. For each $i = 1, 2, \ldots, L$, let $A_i: \mathbb{R}^{n_{i - 1}} \to \mathbb{R}^{n_i}$ be an affine map defined by $A_i(x) = W_ix + b_i$, where $W_i \in \mathbb{R}^{n_{i}\times n_{i - 1}}$ is a weight matrix and $b_i \in \mathbb{R}^{n_i}$ is a bias vector. Let $\sigma_i: \mathbb{R}^{n_i} \to \mathbb{R}^{n_i}$ be a coordinate-wise activation function. A feedforward neural network of depth $L$ is the function $f: \mathbb{R}^{n_0} \to \mathbb{R}^{n_L}$ defined by $f(x) = (\sigma_L\circ A_L)\circ (\sigma_{L - 1}\circ A_{L - 1})\circ \cdots \circ (\sigma_1 \circ A_1)(x)$, for all $x \in \mathbb{R}^{n_0}$.
        \end{definition}
        
        \begin{definition}[Boolean Polynomial Ring]
            Let $\mathbb{F}_2$ denote the field of two elements and $\mathbb{F}_2[x_1, \ldots, x_n]$ be the polynomial ring over the field $\mathbb{F}_2$. The \emph{Boolean polynomial ring} is the quotient ring $\frac{\mathbb{F}_2[x_1, \ldots, x_n]}{\langle x_i^2 - x_i \mid \text{ for } i = 1, 2, \ldots, n\rangle}$.
        \end{definition}
        
        \begin{definition}[Neural Code]
            A \emph{neural code} $C$ is a subset of $\mathbb{F}_2^m$ and a \emph{codeword} is any element in $\mathbb{F}_2^m$.
        \end{definition}
        
        \begin{definition}[Pseudo-monomial]
             A polynomial $f \in \mathbb{F}_2[x_1, x_2, \ldots, x_m]$ is called a \emph{pseudo-monomial} if $f = \prod_{i \in \sigma}x_i\prod_{j \in \tau}(1 - x_j)$, where $\sigma, \tau \subseteq \{1, 2, \ldots, m\}$ such that $\sigma\cap\tau = \emptyset$.
        \end{definition}
        
        \begin{definition}[Characteristic Polynomial]
            Let $v = (v_1, \ldots, v_m) \in \{0,1\}^m$ be a binary vector. The \emph{characteristic polynomial} of $v$ is the pseudo-monomial $\rho_v \in \mathbb{F}_2[x_1, \ldots, x_m]$ defined by $\rho_v(x) = \prod_{v_i=1} x_i \prod_{v_j=0} (1-x_j)$.
        \end{definition}
        
        \begin{definition}[Neural Ideal]
            Let $C \subseteq \{0,1\}^m$ be a neural code. The \emph{neural ideal} associated with $C$ is the ideal 
            \[ J_C = \langle \rho_v \mid v \notin C \rangle + \langle x_i^2 - x_i \mid i=1,\ldots,m \rangle \subseteq \mathbb{F}_2[x_1,\ldots,x_m], \]
            where, for each $v=(v_1,\ldots,v_m)\in \{0,1\}^m$, $\rho_v = \prod_{v_i=1} x_i \prod_{v_j=0} (1-x_j)$ is the characteristic polynomial of $v$.
        \end{definition}
    
        \begin{lemma}[Chapter 8 \cite{zbMATH06665535}]\label{lem:des_chain_condition_initial}
            Let $\mathbb{K}$ be a finite field and $R=\mathbb{K}[x_1,\ldots,x_m]\Big/\left\langle x_i^2-x_i \mid i = 1,\ldots,m\right\rangle$. If $I_1 \supseteq I_2 \supseteq I_3 \supseteq \cdots$ is a descending chain of ideals in $R$, then there exists an integer $n_0$ such that $I_n = I_{n_0}$, for all $n \geq n_0$.
        \end{lemma}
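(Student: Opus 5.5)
The plan is to show that $R$ is a \emph{finite} ring; the descending chain condition then follows at once, because a finite ring has only finitely many ideals.

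First, I will show that every element of $R$ has a representative that is multilinear, that is, a $\mathbb{K}$-linear combination of the squarefree monomials $x_S=\prod_{i\in S}x_i$ with $S\subseteq\{1,\ldots,m\}$. Modulo $\langle x_i^2-x_i\rangle$ we have $x_i^k\equiv x_i$ for every $k\ge 1$, by induction on $k$ using $x_i^{k}=x_i^{k-2}\cdot x_i^2\equiv x_i^{k-1}$. Hence every monomial $\prod_i x_i^{a_i}$ is congruent to $x_S$ with $S=\{i : a_i\ge 1\}$. Extending linearly, every polynomial is congruent to a multilinear one.

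It follows that the residue classes of the $2^m$ monomials $x_S$ span $R$ as a $\mathbb{K}$-vector space, so $\dim_{\mathbb{K}}R\le 2^m$. Linear independence is not needed, although it does hold: since $x_S$ evaluated at $v\in\mathbb{K}^m$ depends only on $\mathbb{K}$-points, one can check it via evaluation on $\{0,1\}^m$. Consequently
\[
|R|\le |\mathbb{K}|^{2^m}<\infty,
\]
because $\mathbb{K}$ is finite.

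Every ideal of $R$ is a subset of $R$, so $R$ has at most $2^{|R|}$ ideals. In a descending chain $I_1\supseteq I_2\supseteq\cdots$, each strict inclusion $I_n\supsetneq I_{n+1}$ lowers the cardinality $|I_n|$ by at least one. Since $|I_1|\le |R|$, there can be at most $|R|$ strict steps. Hence there is an $n_0$ with $I_n=I_{n_0}$ for all $n\ge n_0$.

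There is no genuine obstacle here. The only step needing care is the reduction $x_i^k\equiv x_i$ together with the resulting spanning statement; the finiteness of $\mathbb{K}$ is used exactly once, to conclude $|R|<\infty$. As a remark, the argument actually works over any field: $R$ is then a finite-dimensional $\mathbb{K}$-algebra, hence Artinian, since a chain of ideals is in particular a chain of subspaces and so stabilizes by dimension. I would mention this as the alternative route via $\dim_{\mathbb{K}}I_n$.
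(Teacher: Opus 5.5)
Your argument is correct and matches the reasoning the paper relies on. The paper gives no proof of its own: it cites the theory of Artinian rings (Chapter 8 of the reference), and in the stabilization theorem it appeals to exactly the finiteness of $R$. Your spanning argument with the squarefree monomials $x_S$, followed by the cardinality count on $|I_n|$, makes that finiteness step explicit and self-contained. Your closing remark also correctly observes that finiteness of $\mathbb{K}$ is not actually needed.
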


    \section{Main Results}\label{main_results}
        \begin{definition}[Allowed code space]
              Let $\mathcal{N}$ be a neural network defined on an input space $\mathcal{X} \subseteq \mathbb{R}^d$ that classifies the inputs into $k$-discrete classes. Let $y \in \{1, 2, \ldots, k\}$ denote a specific target class and $\Phi_{\mathcal{N}}: \mathbb{R}^d \to \mathbb{F}_2^m$ denote the threshold map associated with the $m$ neurons of $\mathcal{N}$. The set 
            \begin{equation}
                C(y) = \{ \mathbf{x} \in \mathbb{F}_2^m \mid \Phi_{\mathcal{N}}(\mathbf{v}) = \mathbf{x}, \text{ for some } \mathbf{v} \in \mathcal{X} \}
            \end{equation}
            is the \textit{allowed code space} of $\mathcal{N}$ corresponding to the class $y$.
          \end{definition}
          \begin{theorem}\label{main_theorem}
            Let $\mathcal{N}: \mathbb{R}^d \to \mathbb{R}^k$ be a trained artificial feedforward neural network defined to classify into $k$ discrete classes. Let $y$ be a class among the $k$ discrete classes and $\mathrm{J}_{\mathcal{N}}(y)$ be the neural ideal in the quotient ring $\mathbb{F}_2[x_1, x_2, \ldots, x_m]/\langle x_i^2 - x_i \mid 1 \leq i \leq m \rangle$, where $m$ is number of neurons in $\mathcal{N}$. Let $G_y = \{g_1, g_2, \ldots, g_l\}$ be the generating set of $\mathrm{J}_{\mathcal{N}(y)}$. Furthermore, let $\Phi : \mathbb{R}^d \to \mathbb{F}_2^m$ be the function that maps the input vector $v \in \mathbb{R}^d$ to a code word $x \in \mathbb{F}_2^m$. Given an input vector $v_0 \in \mathbb{R}^d$ and $x_0$ be its corresponding code word, then $\mathcal{N}(v_0) = y$ if and only if $g(x_0) = 0$, for all $g \in G_y$.
          \end{theorem}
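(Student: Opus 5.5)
The strategy is to reduce the statement to the standard fact about neural ideals from \cite{Curto2013}: for a code $C\subseteq\mathbb{F}_2^m$, the variety of $J_C$ in $\mathbb{F}_2^m$ is exactly $C$. Here we take $C = C(y)$, read as the set of codewords $\Phi(\mathbf{v})$ with $\mathbf{v}\in\mathcal{X}$ and $\mathcal{N}(\mathbf{v})=y$; the class condition is implicit in the definition of the allowed code space. We set $\mathrm{J}_{\mathcal{N}}(y)=J_{C(y)}$, with generating set
\[
G_y=\{\rho_w \mid w\notin C(y)\}\cup\{x_i^2-x_i\},
\]
or any other generating set of the same ideal. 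The proof then proceeds in four steps.

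\textbf{Step 1 (evaluation is well defined).} Every $x_0\in\mathbb{F}_2^m$ annihilates each $x_i^2-x_i$. Hence evaluating an element of the quotient ring $R$ at $x_0$ is well defined. Moreover, if one generating set $G_y$ is replaced by another generating set $G'_y$ of the same ideal, the common zero set in $\mathbb{F}_2^m$ does not change, since each element of $G'_y$ is an $R$-combination of elements of $G_y$ and conversely. It therefore suffices to work with the canonical generators above.

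\textbf{Step 2 (the key identity).} For $w,x_0\in\{0,1\}^m$ we check directly that $\rho_w(x_0)=1$ if $x_0=w$ and $\rho_w(x_0)=0$ otherwise. Indeed, $\rho_w(x_0)$ is a product of the factors $x_{0,i}$ for $w_i=1$ and $1-x_{0,j}$ for $w_j=0$. This product equals $1$ exactly when every coordinate of $x_0$ agrees with the corresponding coordinate of $w$.

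\textbf{Step 3 (common zeros are $C(y)$).} By Step 2, $x_0$ is a common zero of all the $\rho_w$ with $w\notin C(y)$ if and only if $x_0\neq w$ for every $w\notin C(y)$, that is, if and only if $x_0\in C(y)$. Combining this with Step 1, $g(x_0)=0$ for all $g\in G_y$ if and only if $x_0\in C(y)$.

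\textbf{Step 4 (link to the network output).} The forward direction is immediate: if $\mathcal{N}(v_0)=y$, then $x_0=\Phi(v_0)\in C(y)$ by definition, so every generator vanishes at $x_0$. The converse needs $x_0\in C(y)$ to imply $\mathcal{N}(v_0)=y$. This is the main obstacle, because it requires that the codeword determine the class: the allowed code spaces $C(y)$ for different classes must be disjoint, or equivalently the classifier must factor through $\Phi$. This is a genuine hypothesis, not a consequence of the definitions, so the proof must make it explicit.

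I would handle it by interpreting the threshold map $\Phi_{\mathcal{N}}$ as recording the activation pattern of all $m$ neurons, including the output layer. Under a thresholded argmax or one-hot output, the output coordinates of $x_0$ then determine $\mathcal{N}(v_0)$. Consequently, if $x_0=\Phi(v)$ for some $v$ with $\mathcal{N}(v)=y$, then $\mathcal{N}(v_0)=\mathcal{N}(v)=y$. If the intended reading of $m$ is hidden neurons only, I would instead state disjointness of the sets $C(y)$ as a standing assumption on the trained network and close the argument with it.
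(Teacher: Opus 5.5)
Your argument is the paper's argument. The forward direction comes straight from the definition of $C(y)$. The converse rests on the fact that for $x_0\notin C(y)$ the pseudo-monomial $\rho_{x_0}$ lies in $\mathrm{J}_{\mathcal{N}}(y)$, is an $R$-combination of the $g_i$, and evaluates to $1$ at $x_0$. Your Steps 1--3 package this as the statement that the common zero set of $G_y$ is exactly $C(y)$.

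You are also right that the converse needs an extra hypothesis: the codeword must determine the class. The paper's proof passes from $\mathcal{N}(v_0)=z\neq y$ to $x_0\notin C(y)$ ``by the definition of the allowed code space''. That step does not follow, because another input of class $y$ may have the same codeword $x_0$. Without the hypothesis the converse can fail. Your treatment is therefore more careful than the paper's on exactly the step that needs it.

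One caution about your first remedy. Including the output neurons in $\Phi$ does not by itself fix the problem for an argmax classifier. The thresholded output pattern only records which outputs exceed their own thresholds $-b_j$. Several outputs may do so, and their relative sizes are lost. State the hypothesis as in your fallback instead: $\mathcal{N}$ factors through $\Phi$, or equivalently the sets $C(y)$ are pairwise disjoint over the classes. This holds automatically in the paper's XOR example, where the input neurons are part of the code.
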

          \begin{proof}
            Let $v_0 \in \mathbb{R}^d$ and $x_0$ be its corresponding neural code obtained by the threshold map $\Phi$. Due to definition of the allowed code space, we have $x_0 \in C(y)$. The neural ideal $\mathrm{J}_{\mathcal{N}}(y)$ contains all the polynomial $f \in \mathrm{R}$ such that $f(x) = 0$, for all $x \in C(y)$. Since $G_y \subseteq \mathrm{J}_{\mathcal{N}}(y)$, it follows that for every $x \in C(y)$, $g_i(x) = 0$, for $i = 1, 2, \ldots, l$. In particular for $x = x_0$, we obtain $g_i(x_0) = 0$, for $i = 1, 2, \ldots, l$. To prove the converse, let $v_0 \in \mathbb{R}^d$ be the input vector that maps to an element $x_0$ of $\mathbb{F}_2^m$ satisfying
            \begin{equation}
              g(x_0) = 0, \ \forall \ g \in G_y.
            \end{equation}
            Suppose that the neural network does not classify the targeted class i.e. $\mathcal{N}(v_0) = z$ where $z \neq y$. By the definition of the allowed code space, we have $x_0 \notin C(y)$ which implies that $x_0 \in C(y)^c$. As the class specific neural ideal $\mathrm{J}_{\mathcal{N}}(y)$ is generated by the set consists of unique pseudo monomials $m_{\sigma}$ for every forbidden state $\sigma \in C(y)^c$ and field equations, it implies that there exists a unique pseudo monomial $m_{x_0}(x)$ contained in the generating set of $\mathrm{J}_{\mathcal{N}}(y)$ obtained by $x_0$ and evaluating at $x = x_0$, we get
            \begin{equation}\label{evaluating at x_0}
              m_{x_0}(x_0) = \left(\Pi_{l: x_l = 1}1\right)\left(\Pi_{o:x_o = 0}(1 - 0)\right) = 1
            \end{equation}
           $m_{x_0}(x)$ belongs to the ideal $\mathrm{J}_{\mathcal{N}}(y)$ and $G_y$ is a generating set of $\mathrm{J}_{\mathcal{N}}(y)$ which means that $m_{x_0}(x)$ can be expressed as a linear combination of elements of $G_y$ with polynomial coefficients $h_i(x) \in \mathrm{R}$.
            \begin{equation}\label{m_{x_0}_equation}
              m_{x_0}(x) = \sum_{i = 1}^{l}h_i(x)\cdot g_i(x)
            \end{equation}
            Substituting $x = x_0$ in equation \eqref{m_{x_0}_equation}, we obtain $m_{x_0}(x_0) = \sum_{i = 1}^{l}h_i(x_0)\cdot g_i(x_0)$. By hypothesis statement $g_i(x_0) = 0$, for $i = 1, 2, \ldots, l$, equation \eqref{evaluating at x_0} and the equation \eqref{m_{x_0}_equation} leads to a contradiction to the assumption that $x_0 \in C(y)^c$. Therefore, our assumption is not correct and $x_0 \in C(y)$ finishing the argument.
          \end{proof}
    
            Theorem~\ref{main_theorem} implies that whenever a neural ideal corresponding to a specific class exists, its generating set characterizes whether a given input vector is classified into that class by the neural network. However, we do not have a concrete algorithm for determining the neural ideal explicitly. In an attempt to study this idea, we introduce few definitions and subsequently present an algorithm applicable to certain specific classes of neural networks.
        
          \begin{definition}[Threshold Map]
              Let $\mathcal{N}$ be a neural network defined on an input space $\mathcal{X} \subseteq \mathbb{R}^d$. A \textit{threshold map} associated with the neural network $\mathcal{N}$ is a function $\Phi_{\mathcal{N}}:\mathbb{R}^d \to \mathbb{F}_2^s$ that maps each input vector $v \in \mathbb{R}^d$ to a binary discrete vector in $\mathbb{F}_2^s$, where $s$ is the number of neurons in the neural network $\mathcal{N}$.
          \end{definition}

          \subsection{Proposed Algorithms}
            The only assumption required for the algorithm is that the neural network accepts binary inputs, namely (0) or (1).
        
          \subsubsection{Algorithm : Threshold Map of an Artificial Feedforward Neural Network}\label{algo_threshold}
          \begin{enumerate}
            \item Start with an artificial feedforward neural network $\mathcal{N}$.
            \item For a given layer $L$ with $s$ neurons, and for each neuron in this layer, obtain its bias value $b_j$, for $1 \leq j \leq s$.
            \item Compute the threshold values $\theta_j$ of the layer by $\theta_j = -b_j$.
            \item Define the threshold map $\Phi_{\mathcal{N}, L}: \mathbb{R}^s \to \mathbb{F}_2^s$ of the layer $L$ by the following coordinate map
            \begin{equation}
              \Phi_{\mathcal{N}, L}(v)_j = \begin{cases} 1 & \text{if } \sum_{k} W_{j,k}v_k \geq \theta_j \\ 0 & \text{otherwise} \end{cases}
            \end{equation}, for $1 \leq j \leq s$.
            \item Concatenate the threshold maps of each layer $\Phi_{\mathcal{N}, L}$ to obtain the threshold map $\Phi_{\mathcal{N}}$.
          \end{enumerate}
        
          \subsubsection{Algorithm : Neural code of an input vector of an Artificial Feedforward Neural Network}\label{algo_neural_code}
          \begin{enumerate}
            \item Start with an artificial feedforward neural network $\mathcal{N}$.
            \item Compute the threshold map $\Phi$ using Algorithm \ref{algo_threshold}.
            \item For an input vector $v$, compute the output vector of each layer $L$, denoted by $o_L(v)$.
            \item Concatenate the output vectors of each layer to obtain output vector
            \begin{equation}
                O(v) = [o_{L_1}(v), o_{L_2}(v), \dots, o_{L_m}(v)]^T,
            \end{equation}
            where $m$ denotes the number of neurons in $\mathcal{N}$.
            \item Compute the neural code for the input vector $v$ by evaluating the threshold map at $O(v)$.
          \end{enumerate}

          \subsubsection{Algorithm for Computing the Allowed Code Space of a Class}\label{alg:allowed_code_space}
        
          \begin{enumerate}
              \item Start with an artificial feedforward neural network $\mathcal{N}$ and a set $\Omega$ consisting of all vectors from the training data whose target class is $y$, where $y$ is one of the classes that $\mathcal{N}$ classifies.
            \item Compute the threshold map $\Phi$ for $\mathcal{N}$ using Algorithm \ref{algo_threshold}.
            \item For each input vector $v \in \Omega$, compute its corresponding neural code by using Algorithm \ref{algo_neural_code}.
            \item Collect all the computed neural codes into a single set.
            \item Define this set as the allowed code space for class $y$.
          \end{enumerate}
          
          For a given neural network and a target class, Algorithm~\ref{alg:allowed_code_space} is used to determine the corresponding allowed code space. From the Algorithm in \cite{Curto2013}, we compute the canonical form of the neural ideal associated with the target class classified by the neural network.
          
          \begin{remark}
              The above algorithms can be generalized to arbitrary neural network architectures by treating the first hidden layer as the initial layer of analysis rather than the input layer. While this approach does not allow direct determination of input ideal memberships. However, this approach enables us to interpret the features learned by individual hidden neurons within the network. 
          \end{remark}
          The computational complexity of the proposed method depends on two primary factors: the cardinality of the data points and the total number of neurons within the network architecture. We focus exclusively on addressing the constraints of the first case to compute the corresponding neural ideals. While the second case falls outside the scope of this study and is left for future investigation.
    
          \begin{lemma}\label{lem:ideal_containment}
            Let $\mathcal{N}: \mathbb{R}^d \to \mathbb{R}^k$ be a trained artificial feedforward neural network, and let $\Phi : \mathbb{R}^d \to \mathbb{F}_2^m$ be its associated threshold map. Let $\Omega \subseteq \mathbb{R}^d$ be the input space corresponding to a target class $y$. For any subset of inputs $D \subseteq \Omega$, let $\mathcal{A}(D)$ and $\mathcal{F}(D)$ denote the set of allowed and forbidden neural codes, respectively. If $D_1$ and $D_2$ are two subsets of inputs such that $D_1 \subseteq D_2 \subseteq \Omega$, then $\mathrm{J}_{\mathcal{N}_{D_2}}(y) \subseteq \mathrm{J}_{\mathcal{N}_{D_1}}(y)$.
          \end{lemma}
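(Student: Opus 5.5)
The plan is to fix the definitions implicit in the statement and then argue by monotonicity of the forbidden sets. For a subset $D \subseteq \Omega$, the allowed codes are
\[
\mathcal{A}(D) = \Phi(D) = \{\Phi(v) \mid v \in D\} \subseteq \mathbb{F}_2^m ,
\]
as produced by Algorithm~\ref{alg:allowed_code_space}. The forbidden codes are the complement $\mathcal{F}(D) = \mathbb{F}_2^m \setminus \mathcal{A}(D)$. The ideal $\mathrm{J}_{\mathcal{N}_D}(y)$ is the neural ideal of the code $\mathcal{A}(D)$ in the sense of the Neural Ideal definition, namely
\[
\mathrm{J}_{\mathcal{N}_D}(y) = \langle \rho_w \mid w \in \mathcal{F}(D)\rangle + \langle x_i^2 - x_i \mid 1 \le i \le m\rangle ,
\]
read either in $\mathbb{F}_2[x_1,\ldots,x_m]$ or in its image in the Boolean ring $R$. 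The argument is identical in both settings, because the field equations appear as common generators.

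The proof proceeds in three steps.

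\textbf{Step 1 (allowed codes grow).} Since $D_1 \subseteq D_2$, taking images under $\Phi$ gives $\Phi(D_1) \subseteq \Phi(D_2)$, that is, $\mathcal{A}(D_1) \subseteq \mathcal{A}(D_2)$.

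\textbf{Step 2 (forbidden codes shrink).} Taking complements in $\mathbb{F}_2^m$ reverses the inclusion, so $\mathcal{F}(D_2) \subseteq \mathcal{F}(D_1)$.

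\textbf{Step 3 (generators are contained).} Every generator of $\mathrm{J}_{\mathcal{N}_{D_2}}(y)$ lies in $\mathrm{J}_{\mathcal{N}_{D_1}}(y)$:
\begin{itemize}
\item the field equations $x_i^2 - x_i$ are generators of both ideals;
\item each $\rho_w$ with $w \in \mathcal{F}(D_2)$ also satisfies $w \in \mathcal{F}(D_1)$ by Step 2, so it is a generator of $\mathrm{J}_{\mathcal{N}_{D_1}}(y)$.
\end{itemize}
An ideal generated by a subset of another ideal is contained in it, which gives $\mathrm{J}_{\mathcal{N}_{D_2}}(y) \subseteq \mathrm{J}_{\mathcal{N}_{D_1}}(y)$.

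As an optional cross-check, one can use the standard fact from \cite{Curto2013} that $J_C$ equals the vanishing ideal $I(C)$ of $C$ over $\mathbb{F}_2$. Vanishing ideals reverse inclusion, so $C_1 \subseteq C_2$ implies $I(C_2) \subseteq I(C_1)$, which is the same conclusion.

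None of these steps is a real obstacle. The only care needed is to pin down that $\mathrm{J}_{\mathcal{N}_D}(y)$ is defined from the allowed codes of $D$ alone, with complement taken in all of $\mathbb{F}_2^m$, rather than from codes of other classes. Once that is fixed, the argument is just order-reversal under complementation.
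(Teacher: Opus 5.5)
Your proof is correct, and Steps 1 and 2 are the paper's argument: allowed codes grow with $D$, so forbidden codes shrink under complementation in $\mathbb{F}_2^m$.

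Your Step 3 closes the argument differently from the paper. You observe that the generating set $\{\rho_w \mid w \in \mathcal{F}(D_2)\}$, together with the field equations, is a subset of the corresponding generating set for $D_1$.

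The paper instead finishes by saying that a polynomial vanishing on all of $\mathcal{F}(D_1)$ must vanish on all of $\mathcal{F}(D_2)$. Read literally, that concerns vanishing ideals of the forbidden sets. It would yield $I(\mathcal{F}(D_1)) \subseteq I(\mathcal{F}(D_2))$, which is the reverse inclusion and about the wrong objects. Indeed $\rho_w(w)=1$, so the generators do not vanish on forbidden codes.

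Your generator-containment argument is what actually justifies the conclusion. So is your cross-check via $J_C = I(C)$ with $C$ the allowed code: $C_1 \subseteq C_2$ gives $I(C_2) \subseteq I(C_1)$. The generator argument also works verbatim in $\mathbb{F}_2[x_1,\ldots,x_m]$ and in the Boolean quotient, without needing the identification $J_C = I(C)$ from Curto et al.
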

            
          \begin{proof}
            Let $D_1 \subseteq D_2 \subseteq \Omega$. By the definition of the allowed code space, we have $\mathcal{A}(D_1) \subseteq \mathcal{A}(D_2)$.
            By taking the complements in $\mathbb{F}_2^m$ to find the forbidden states, we obtain $\mathcal{F}(D_2) \subseteq \mathcal{F}(D_1)$.
            Since $\mathcal{F}(D_2)$ is a subset of $\mathcal{F}(D_1)$, a polynomial $f$ that vanishes on all elements of $\mathcal{F}(D_1)$ must necessarily vanish on all elements of $\mathcal{F}(D_2)$. Therefore, $ \mathrm{J}_{\mathcal{N}_{D_2}}(y) \subseteq \mathrm{J}_{\mathcal{N}_{D_1}}(y)$.
            This completes our argument.
            \end{proof}
          \begin{theorem}
              Let $\mathcal{N}: \mathbb{R}^d \to \mathbb{R}^k$ be a trained artificial feedforward neural network defined to classify into $k$ discrete classes. Let $\Phi : \mathbb{R}^d \to \mathbb{F}_2^m$ be the function that maps the input vector $v \in \mathbb{R}^d$ to a neural code $x \in \mathbb{F}_2^m$, where $m$ is the number of neurons in $\mathcal{N}$. Let $G_y = \{g_1, g_2, \ldots, g_l\}$ be the generating set of $\mathrm{J}_{\mathcal{N}(y)}$. Let $\Omega \subseteq \mathbb{R}^d$ be the space of inputs of class $y$ that are used to train the neural network $\mathcal{N}$ containing very large amount of elements. Furthermore, let $y$ be a class among the $k$ discrete classes and $\mathrm{J}_{\mathcal{N}_{\Omega}}(y)$ be the neural ideal of $\mathcal{N}$. Then there exists a natural number $N_0$ such that for every $\Omega_0 \subseteq \Omega$ containing at least $N_0$ elements $J_{\mathcal{N}}(y) = J_{\mathcal{N}_{\Omega_0}}(y)$, where $J_{\mathcal{N}_{\Omega_0}}(y)$ is the neural ideal of $\mathcal{N}$ constructed only with the elements of $\Omega_0$.
          \end{theorem}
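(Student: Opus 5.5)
The plan is to avoid algebraic machinery as far as possible and reduce everything to the finiteness of the code space $\mathbb{F}_2^m$. By construction, $\mathrm{J}_{\mathcal{N}_{D}}(y)$ depends on $D$ only through the allowed code set $\mathcal{A}(D)=\Phi(D)\subseteq \mathbb{F}_2^m$. Its generators are the characteristic pseudo-monomials $\rho_v$ for $v\notin \mathcal{A}(D)$, together with the field equations. So it suffices to find $N_0$ such that every $\Omega_0\subseteq\Omega$ with $|\Omega_0|\ge N_0$ satisfies $\Phi(\Omega_0)=\Phi(\Omega)$. Equality of the ideals then follows immediately, since they have literally the same generating sets. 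One inclusion is already given by Lemma~\ref{lem:ideal_containment}: $\Omega_0\subseteq\Omega$ yields $\mathrm{J}_{\mathcal{N}_{\Omega}}(y)\subseteq \mathrm{J}_{\mathcal{N}_{\Omega_0}}(y)$.

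\textbf{Step 1: partition $\Omega$ into fibres.} I will write $\Phi(\Omega)=\{c_1,\ldots,c_r\}$ with $r\le 2^m$. For each $i$, let $F_i=\Phi^{-1}(c_i)\cap\Omega$ be the fibre over $c_i$. These fibres partition $\Omega$ into at most $2^m$ nonempty pieces. I read the hypothesis that $\Omega$ ``contains a very large amount of elements'' as saying that $\Omega$ is a finite (training) set, and I work with that reading.

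\textbf{Step 2: choose $N_0$ by pigeonhole.} I set $N_0=|\Omega|-\min_i|F_i|+1$. Suppose $\Omega_0\subseteq\Omega$ misses some fibre $F_i$ entirely. Then $\Omega_0\subseteq\Omega\setminus F_i$, so $|\Omega_0|\le|\Omega|-|F_i|<N_0$. Hence any $\Omega_0$ with $|\Omega_0|\ge N_0$ meets every fibre, which gives $\Phi(\Omega_0)=\Phi(\Omega)$. Consequently the forbidden sets coincide, $\mathcal{F}(\Omega_0)=\mathcal{F}(\Omega)$, and so do the canonical generators. This yields $\mathrm{J}_{\mathcal{N}_{\Omega_0}}(y)=\mathrm{J}_{\mathcal{N}_{\Omega}}(y)=\mathrm{J}_{\mathcal{N}}(y)$.

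\textbf{Step 3 (complementary, for the ``stabilization'' reading).} If one instead samples $\Omega$ incrementally along an increasing chain $D_1\subseteq D_2\subseteq\cdots$, Lemma~\ref{lem:ideal_containment} gives a descending chain of ideals $\mathrm{J}_{\mathcal{N}_{D_1}}(y)\supseteq\mathrm{J}_{\mathcal{N}_{D_2}}(y)\supseteq\cdots$ in $R$. Lemma~\ref{lem:des_chain_condition_initial} then shows that this chain stabilizes at some index. Equivalently, $\Phi(D_n)$ is an increasing chain in the finite set $\mathbb{F}_2^m$ and must become constant. I will include this as the sequential version of the theorem.

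\textbf{Main obstacle.} The delicate point is the quantifier ``for every $\Omega_0$ with at least $N_0$ elements''. The bound $N_0$ cannot be uniform in terms of $m$ alone: a large $\Omega_0$ could be drawn entirely from one huge fibre and miss a rare code. This is why $N_0$ has to be built from the smallest fibre size as in Step 2, and why finiteness of $\Omega$ (or at least finiteness of some fibre) is essential. If $\Omega$ were infinite with a finite fibre, the statement as written would fail. I would therefore make the finiteness assumption explicit and state $N_0$ in terms of $|\Omega|$ and $\min_i|F_i|$.
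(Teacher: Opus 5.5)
Your argument is correct under the reading that $\Omega$ is a finite training set, and that reading is necessary. It takes a genuinely different route from the paper.

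The paper fixes one nested chain $\Delta_1\subseteq\Delta_2\subseteq\cdots\subseteq\Omega$ with $|\Delta_i|=i$. It uses Lemma~\ref{lem:ideal_containment} and the descending chain condition of Lemma~\ref{lem:des_chain_condition_initial} to get a stabilization index $N_0$ for that chain. It then asserts $\mathrm{J}_{\mathcal{N}_{\Omega_0}}(y)\subseteq\mathrm{J}_{\mathcal{N}_{\Delta_{N_0}}}(y)=\mathrm{J}_{\mathcal{N}_{\Omega}}(y)$ for an arbitrary $\Omega_0$ with $|\Omega_0|\ge N_0$. That transfer is not justified, because an arbitrary $\Omega_0$ need not contain $\Delta_{N_0}$. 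In fact the stabilization index of a single chain can be too small. Suppose $\Phi(\Omega)=\{c_1,c_2\}$ and $c_1$ is realized by only one point $p$. A chain that takes $p$ first and then a point of $F_2$ stabilizes at index $2$. Yet $\Omega_0=\Omega\setminus\{p\}$, which can be arbitrarily large, gives a strictly larger ideal.

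Your fibre decomposition handles the quantifier over all $\Omega_0$ directly. It reduces equality of ideals to the set equality $\Phi(\Omega_0)=\Phi(\Omega)$, so no chain condition is needed at all. It also gives an explicit value $N_0=|\Omega|-\min_i|F_i|+1$ that is sharp: for a smallest fibre $F_i$, the set $\Omega\setminus F_i$ has $N_0-1$ elements and misses $c_i$. The chain argument, which your Step 3 reproduces, only yields the weaker sequential statement.

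One small correction to your closing remark: the parenthetical suggesting that finiteness of some fibre might suffice is wrong. If $\Omega$ is infinite and $|\Phi(\Omega)|\ge 2$, some fibre is infinite, and taking $\Omega_0$ to be that fibre defeats every $N_0$. So the statement holds precisely when $\Omega$ is finite or $|\Phi(\Omega)|\le 1$.
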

          \begin{proof}
              Let $\Delta_1 \subseteq \Delta_2\subseteq \cdots \subseteq \Omega$ be a sequence of subsets, where $\Delta_i$ consists of $i$ elements. By Lemma \ref{lem:ideal_containment}, this nested sequence induces a nested descending chain of neural ideals in $S = $.
              \begin{equation}\label{eq:des_chain_equation}
                  J_{\mathcal{N}_{\Delta_1}(y)} \supseteq J_{\mathcal{N}_{\Delta_2}(y)} \supseteq J_{\mathcal{N}_{\Delta_3}(y)} \supseteq \cdots \supseteq J_{\mathcal{N}_{\Omega}}(y)
              \end{equation}
    
              Since $\mathbb{F}_2[x_1, x_2, \ldots, x_m]/\langle x_i^2 - x_i \mid 1 \leq i \leq m \rangle$ is finite, where $m$ is number of neurons in $\mathcal{N}$. By Lemma \ref{lem:des_chain_condition_initial}, the descending chain of ideals \eqref{eq:des_chain_equation} must stabilize after a finite number of steps. Thus, there exists a natural number $N_0$ such that $\forall \, n \geq N_0$
              \begin{equation}\label{eq:ideal_stabilization}
                  J_{\Delta_n}(y) = J_{\mathcal{N}_{\Delta_{N_0}}}(y)
              \end{equation}
    
              Let $\Omega_0 \subseteq \Omega$ be an arbitrary subset containing at least $N_0$ elements. By Lemma \ref{lem:ideal_containment}, we have $J_{\mathcal{N}_{\Omega_0}}(y) \supseteq J_{\mathcal{N}_{\Omega}}(y)$. If $\Omega_0$ contains at least $N_0$ elements, from \eqref{eq:ideal_stabilization} we obtain $J_{\mathcal{N}_{\Omega_0}}(y) \subseteq J_{\mathcal{N}_{\Omega_{N_0}}}(y) = J_{\mathcal{N}_{\Omega}}(y)$. Combining both inclusions, we get $J_{\mathcal{N}_{\Omega}}(y) = J_{\mathcal{N}_{\Omega_0}}(y
              )$. This completes the proof.
          \end{proof}
    \section{Applications}\label{sec:applications}
    
        In this section, we illustrate the proposed framework through two classification problems of different scales. We first consider the classical XOR classification problem \cite{3175865}, where the associated neural ideal can be computed explicitly and interpreted in terms of the activation behavior of the network. We then apply the same methodology to the MNIST handwritten digit dataset in order to examine the algebraic structure of hidden representations learned by a larger neural network.
        
        These examples demonstrate how neural codes, allowed code spaces, and neural ideals can be used to provide an algebraic description of neural network classification while retaining information about the internal representations learned by the model.

    \subsection{XOR Classification}
    
        We illustrate the proposed framework using the classical XOR classification problem. Although simple in structure, XOR represents a fundamental non-linearly separable classification task and provides a convenient setting for explicitly constructing neural codes, allowed code spaces, and neural ideals.
        
        Consider a feedforward neural network consisting of five neurons, denoted by $x_1,x_2,\ldots,x_5$, arranged into input, hidden, and output layers. The network architecture together with the corresponding weights and activation thresholds is shown in Figure~\ref{fig:xor_network}.
    
         \begin{figure}[htbp]
            \centering
            \begin{tikzpicture}[
                    shorten >=1pt,->,
                    draw=black, node distance=2.5cm,
                    every pin edge/.style={<-,shorten <=1pt},
                    neuron/.style={circle, fill=white, minimum size=22pt, inner sep=0pt, draw=black, font=\small},
                    input neuron/.style={neuron},
                    hidden neuron/.style={neuron},
                    output neuron/.style={neuron},
                    annot/.style={text width=4em, text centered},
                    weight/.style={sloped, allow upside down, font=\scriptsize}
                ]
            
                    \foreach \name / \y in {1/1, 2/2}
                        \node[input neuron] (I-\name) at (0,-\y*1.5) {$x_\name$};
                
                    \foreach \name / \y in {3/1, 4/2}
                        \node[hidden neuron] (H-\name) at (2.5cm,-\y*1.5) {$x_\name$};
                
                    \node[output neuron] (O) at (5cm,-2.25) {$x_5$};
                
                    \path (I-1) edge node[weight, above, midway] {$w_{13}\!=\!1$} (H-3);
                    \path (I-1) edge node[weight, below, pos=0.23]{$w_{14}\!=\!1$} (H-4); 
                    \path (I-2) edge node[weight, above, pos=0.7] {$w_{23}\!=\!1$} (H-3); 
                    \path (I-2) edge node[weight, below, midway] {$w_{24}\!=\!1$} (H-4);
                
                    \path (H-3) edge node[weight, above, midway] {$w_{35}\!=\!1$} (O);
                    \path (H-4) edge node[weight, below, midway] {$w_{45}\!=\!-2$} (O);
                
                    \node[below=0.15cm of H-3, font=\scriptsize] {$\theta_3 = 1$};
                    \node[below=0.15cm of H-4, font=\scriptsize] {$\theta_4 = 2$};
                    \node[below=0.15cm of O, font=\scriptsize] {$\theta_5 = 1$};
                
                    \node[annot, above of=I-1, node distance=1cm] {Input\\Layer};
                    \node[annot, above of=H-3, node distance=1cm] {Hidden\\Layer};
                    \node[annot, above of=O, node distance=1.75cm] {Output\\Layer};
            \end{tikzpicture}
            \caption{The 5-neuron feedforward network layout optimized for XOR classification, including corresponding weights ($w_{ij}$) and activation thresholds ($\theta_i$).}
            \label{fig:xor_network}
            \end{figure}

    It is straightforward to verify that the network realizes the XOR classification function. We now analyze the same network through the algebraic framework developed in the previous section. The objective is to determine the neural codes generated by the network, construct the corresponding allowed code space, and derive the associated neural ideal.
    
    Since the output neuron is determined by the activation patterns of the input and hidden neurons, we focus on the first four neurons. Each neuron can either be active or inactive, and hence every activation pattern can be represented by a binary vector in $\mathbb{F}_2^4$.
    
    Using Algorithm~\ref{algo_threshold} and Algorithm~\ref{algo_neural_code}, we first compute the neural code associated with each input vector. To illustrate the procedure, consider the input vector $(1,0)$.
    
    \begin{enumerate}
        \item For the neuron $x_3$,
        \[
        w_{13}x_1+w_{23}x_2 = 1 \geq \theta_3 =1.
        \]
        Hence $x_3$ activates and therefore $x_3=1$.
    
        \item For the neuron $x_4$,
        \[
        w_{14}x_1+w_{24}x_2 = 1 < \theta_4 =2.
        \]
        Hence $x_4$ does not activate and therefore $x_4=0$.
    \end{enumerate}
    
    Consequently, the neural code associated with the input $(1,0)$ is $(1,0,1,0)$. Repeating the same computation for the remaining input configurations yields the neural codes listed in Table~\ref{xor_neural_code_table}.
        \begin{table}[htbp]
            \centering
            \begin{tabular}{|>{\centering\arraybackslash}p{1.2cm}|c|>{\centering\arraybackslash}p{2cm}|}
                \hline
                Input & Class & Neural code\\
                \hline
                $(0, 0)$ & 0 & $(0, 0, 0, 0)$\\
                \hline
                $(1, 0)$ & 1 & $(1, 0, 1, 0)$\\
                \hline
                $(0, 1)$ & 1 & $(0, 1, 1, 0)$\\
                \hline
                $(1, 1)$ & 0 & $(1, 1, 1, 1)$\\
                \hline
            \end{tabular}
            \caption{Neural codes of XOR}
            \label{xor_neural_code_table}
        \end{table}
    We next determine the allowed code space corresponding to the positive XOR class. By Definition~3.1, the allowed code space consists of all neural codes associated with inputs classified into Class $\mathbf{1}$. From Table~\ref{xor_neural_code_table}, we obtain
    \[
    C(\mathbf{1})
    =
    \{
    (1,0,1,0),
    (0,1,1,0)
    \}.
    \]
    
    The corresponding neural ideal is constructed from the forbidden states, namely those binary vectors that do not belong to the allowed code space. Since $\mathbb{F}_2^4$ contains $2^4=16$ binary vectors and only two belong to $C(\mathbf{1})$, the remaining fourteen vectors constitute the forbidden code space
    
    \[
    C(\mathbf{1})^c
    =
    \mathbb{F}_2^4
    \setminus
    C(\mathbf{1}).
    \]
    
    For each forbidden vector
    $\mathbf a=(a_1,a_2,a_3,a_4)$,
    we associate the pseudo-monomial
    
    \[
    \prod_{i=1}^{4}(1-a_i-x_i).
    \]
    
    Rather than working with all forbidden states individually, we identify the algebraic relations satisfied by every neural code belonging to $C(\mathbf{1})$. Since all computations take place over $\mathbb{F}_2$, the Boolean relations $x_i^2=x_i$ hold for each $i\in\{1,2,3,4\}$.
    
    The neural codes belonging to the class $\mathbf{1}$ satisfy the following structural constraints.
    
    \begin{enumerate}
        \item For every
        $a=(a_1,a_2,a_3,a_4)\in C(\mathbf{1})$,
        we have $a_4=0$.
        Hence every state with $a_4=1$ is forbidden, giving rise to the polynomial relation $x_4$.
    
        \item For every
        $a=(a_1,a_2,a_3,a_4)\in C(\mathbf{1})$,
        we have $a_3=1$.
        Hence every state with $a_3=0$ is forbidden, yielding the relation $1-x_3$.
    
        \item For every
        $a=(a_1,a_2,a_3,a_4)\in C(\mathbf{1})$,
        the coordinates $a_1$ and $a_2$ are distinct. Consequently, the following configurations are forbidden:
        \begin{enumerate}
            \item $a_1=a_2=0$, yielding the relation
            \[
            (1-x_1)(1-x_2),
            \]
    
            \item $a_1=a_2=1$, yielding the relation
            \[
            x_1x_2.
            \]
        \end{enumerate}
    
        Since $x_1x_2=0$, the relation
        \[
        (1-x_1)(1-x_2)
        \]
        simplifies to
        \[
        1-x_1-x_2.
        \]
    \end{enumerate}
    
    Combining these relations together with the Boolean relations
    $x_i^2-x_i$,
    for
    $i\in\{1,2,3,4\}$,
    we obtain the neural ideal
    
    \[
    J_{\mathcal N}(\mathbf 1)
    =
    \left\langle
    1-x_1-x_2,\;
    x_1x_2,\;
    1-x_3,\;
    x_4
    \right\rangle.
    \]
    
    The neural ideal provides a complete algebraic description of the XOR classification mechanism. Each generator corresponds to a structural constraint satisfied by every admissible neural code associated with the target class.
    
    \begin{enumerate}
        \item The generator $x_4$ indicates that the AND feature represented by the neuron $x_4$ must remain inactive.
    
        \item The generator $1-x_3$ implies that the OR feature represented by the neuron $x_3$ must remain active.
    
        \item The generator $1-x_1-x_2$ encodes the defining XOR condition that exactly one of the two input neurons is active.
    \end{enumerate}
    
    Thus, the classification behaviour of the network can be recovered directly from the algebraic relations contained in the neural ideal.

    \subsection{Feature Interpretation through Neural Codes}
    
    One of the important aspects of the proposed algebraic framework is its ability to provide insight into the internal representations learned by a neural network. Although neural networks are often regarded as black-box models, the neural codes introduced in this work provide a structured description of hidden-layer activations. By associating activation patterns with the corresponding input samples, it becomes possible to investigate the features captured by individual neurons and to obtain an interpretable description of their functional roles within the network.
    
    The central idea is that neurons exhibiting similar activation behaviour correspond to specific visual or structural patterns present in the input data. Consequently, the combinatorial information encoded by neural codes can be used not only for classification analysis but also for understanding the features learned by the network. The following procedure describes the feature interpretation framework employed in this work.
    
    \subsubsection{Algorithm: Feature Visualization through Neural Codes}
    
    \begin{enumerate}
        \item Begin with a trained feedforward neural network together with its training dataset.
        
        \item Compute the threshold map using Algorithm~\ref{algo_threshold}.
        
        \item Apply Algorithm~\ref{algo_neural_code} to determine the neural codes associated with the input samples.
        
        \item Select a hidden neuron and identify all input samples for which the neuron remains active.
        
        \item Collect the corresponding input images and compute their average. The resulting image provides a visual representation of the feature associated with the selected neuron.
        
        \item Similarly, compute the average of all input images for which the neuron remains inactive. This image represents the complementary suppression feature associated with the neuron.
        
        \item Compare the activation and suppression patterns to obtain an interpretable description of the feature learned by the neuron.
    \end{enumerate}
    
    
    \subsection{MNIST Handwritten Digit Classification}
    
    To demonstrate the applicability of the proposed framework on a practical classification task, we consider a feedforward neural network trained on the MNIST handwritten digit dataset \cite{lecun-mnisthandwrittendigit-2010}. The network maps input images represented as vectors in $\mathbb{R}^{784}$ to one of the ten digit classes.
    
    The neural network consists of the following layers.
    
    \begin{enumerate}
        \item A first hidden layer containing $64$ neurons followed by a ReLU activation.
        
        \item A second hidden layer containing $16$ neurons followed by a ReLU activation.
        
        \item An output layer consisting of $10$ neurons responsible for the final classification.
    \end{enumerate}
    
    The model is trained using the Adam optimizer for ten epochs. Following training, the activation thresholds associated with the hidden neurons are determined. For a neuron with bias parameter $b$, the corresponding threshold is given by $-b$, which represents the hyperplane at which the activation state of the neuron changes.
    
    Using Algorithm~\ref{alg:allowed_code_space}, each input image is transformed into a binary activation pattern, referred to as its neural code. Images belonging to the same class tend to generate similar neural codes, indicating that the network organizes visually related inputs into common combinatorial regions of the hidden representation space. The collection of neural codes corresponding to a particular class forms the associated allowed code space.
    
    For a fixed target class, such as the digit $3$, let $C_3$ denote the set of neural codes generated by images belonging to that class. The complement of $C_3$ within the space of all binary activation states yields the corresponding forbidden code space, denoted by $\mathcal{F}$. Each forbidden code
    \[
    \mathbf{a}=(a_1,\ldots,a_m)
    \]
    determines the pseudo-monomial
    \[
    \prod_{k=1}^{m}
    \left(
    a_kx_k+(1-a_k)(1-x_k)
    \right).
    \]
    
    The neural ideal associated with the target class is generated by these pseudo-monomials. Consequently, the neural ideal provides an algebraic description of the combinatorial constraints satisfied by the hidden representations of that class.
    
    While neural ideals characterize hidden representations algebraically, understanding their practical significance requires relating hidden neurons back to the image domain. To accomplish this, we employ the feature interpretation procedure introduced in the previous subsection.
    
    For each hidden neuron, all training images that activate the neuron are collected and averaged. The resulting image represents the visual structure most strongly associated with that neuron. Similarly, averaging the images for which the neuron remains inactive produces a complementary suppression feature. The difference between these two averages yields a feature balance map that highlights image regions contributing positively or negatively to neuron activation. These reconstructed visualizations provide an empirical interpretation of the features captured by the hidden neurons.
    
    To further investigate how the learned hidden representations contribute to the final prediction, we analyze the weight matrix connecting the hidden neurons to the output layer. Figure~\ref{fig:features_and_weights} presents the corresponding heatmap, where positive weights indicate evidence supporting a class, whereas negative weights indicate evidence opposing that class.
    
    \begin{figure}[ht]
        \centering
        
        \begin{subfigure}{0.48\textwidth}
            \centering
            \includegraphics[width=\linewidth]{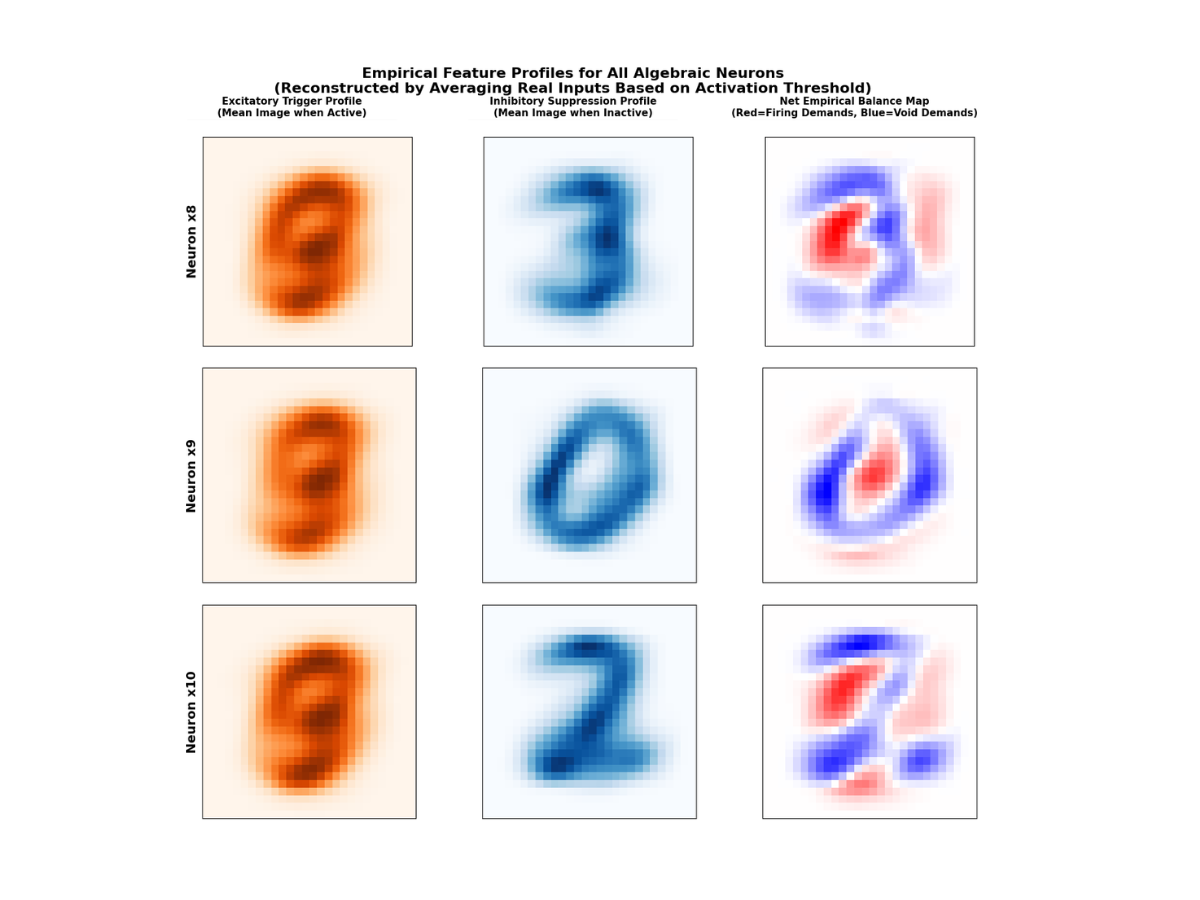}
            \caption{Features learned by hidden neurons 8, 9, 10.}
            \label{fig:learned_features_of_hidden_neurons}
        \end{subfigure}
        \hfill
        \begin{subfigure}{0.48\textwidth}
            \centering
            \includegraphics[width=\linewidth]{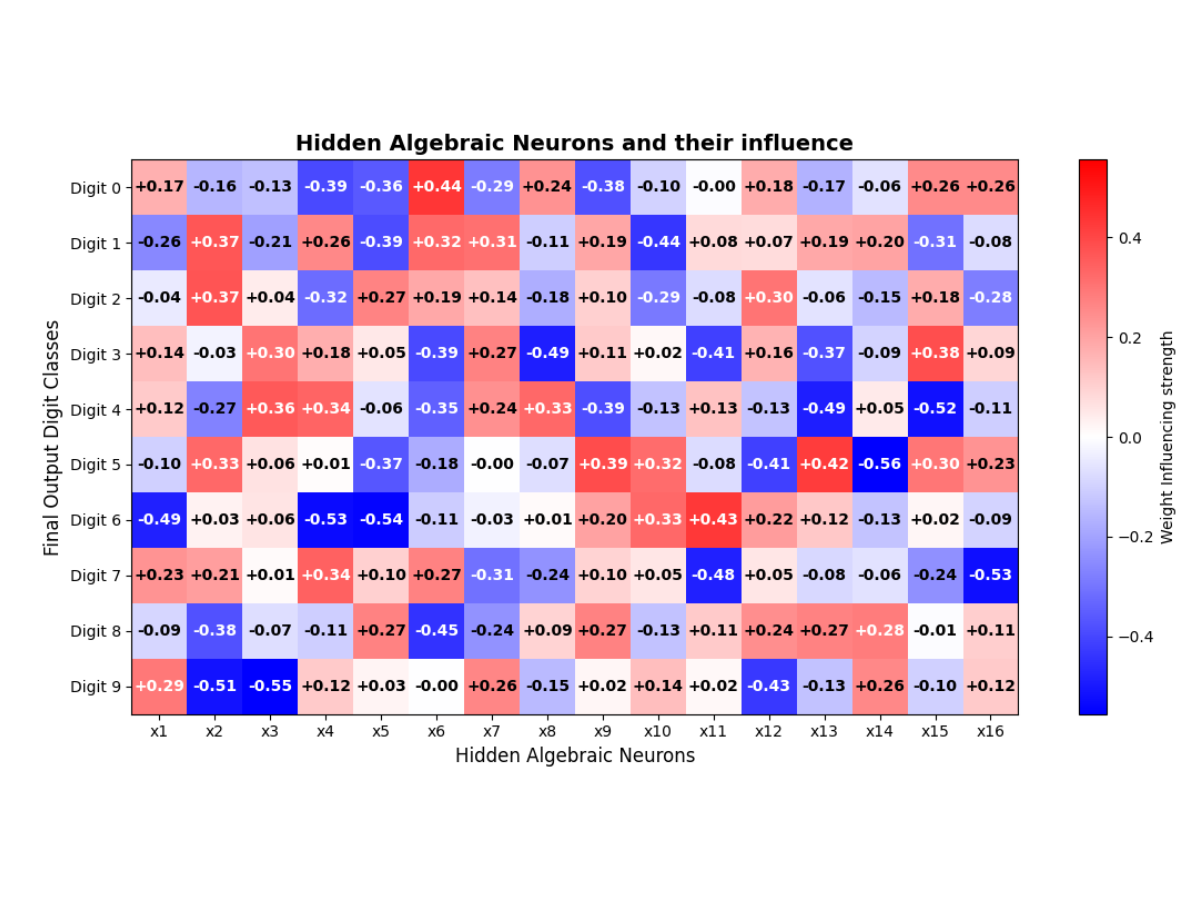}
            \caption{Weight matrix connecting hidden neurons to the output layer.}
            \label{fig:learned_weight_matrix}
        \end{subfigure}
        
        \caption{Visualization of learned features and output-layer weights.}
        \label{fig:features_and_weights}
    \end{figure}
    
    The heatmap indicates that the classifier does not rely on any single hidden neuron. Instead, each digit class is characterized by a distinct combination of supporting and suppressing features distributed across multiple neurons. The final classification decision therefore emerges from the collective contribution of the hidden representations learned by the network.

    \begin{itemize}
    
        \item \textbf{Digit 0.}
        The weight matrix indicates that neurons $x_6$ and $x_8$ contribute positively towards the classification of digit $0$, whereas neurons $x_4$ and $x_5$ contribute negatively. This suggests that the features associated with $x_6$ and $x_8$ are more frequently observed in images of digit $0$, while the patterns represented by $x_4$ and $x_5$ are less compatible with this class.
    
        \item \textbf{Digit 1.}
        Digit $1$ receives strong positive contributions from neurons $x_2$, $x_6$, and $x_7$, whereas neuron $x_5$ is strongly suppressed. This behaviour indicates that the hidden representations encoded by $x_2$, $x_6$, and $x_7$ play an important role in distinguishing images belonging to this class.
    
        \item \textbf{Digit 3.}
        The classification of digit $3$ exhibits a strong positive dependence on neuron $x_3$, while neuron $x_8$ contributes negatively. This observation suggests that the representation of digit $3$ is determined by a combination of supporting and suppressing features rather than by the activation of a single neuron.
    
        \item \textbf{Digit 6.}
        Digit $6$ is associated with some of the strongest negative weights in the entire matrix, particularly for neurons $x_1$, $x_4$, and $x_5$. At the same time, neurons $x_2$ and $x_3$ contribute positively. This indicates that the network identifies digit $6$ through a selective balance between favourable and unfavourable hidden representations.
    
        \item \textbf{Digit 9.}
        The classification of digit $9$ is characterized by negative contributions from neurons $x_2$ and $x_3$, together with positive support from neuron $x_1$. Such behaviour illustrates how the network differentiates visually similar digit classes through variations in their hidden activation patterns.

    \end{itemize}
    \subsection{Software Implementation and Interactive Exploration Framework}
    
    To demonstrate the practical applicability of the proposed framework, we developed an interactive software tool, namely the \emph{Neural Network Representation Explorer}. The tool implements the complete pipeline introduced in this work, including threshold-map construction, neural-code generation, allowed code space computation, neural ideal analysis, and feature interpretation.
    
    The primary objective of the tool is to provide an interpretable interface for studying the internal representations learned by trained neural networks. Given a trained model and the corresponding dataset, the framework automatically computes the binary activation patterns associated with hidden neurons and constructs the corresponding neural-code representation. These codes are subsequently used to identify class-specific allowed code spaces and their associated algebraic structures.
    
    Figure~\ref{fig:tool_overview} presents the graphical interface of the developed system. The left panel visualizes the neural network architecture together with the learned inter-layer connections. Positive and negative weights are displayed using distinct colour schemes, allowing the contribution of individual neurons to be examined visually. The right panel provides detailed information about the selected output node, including the activation state, threshold metric, allowed code-space information, and receptive-field relationships.
    
    \begin{figure*}[t!]
    \centering
    \includegraphics[width=0.85\textwidth]{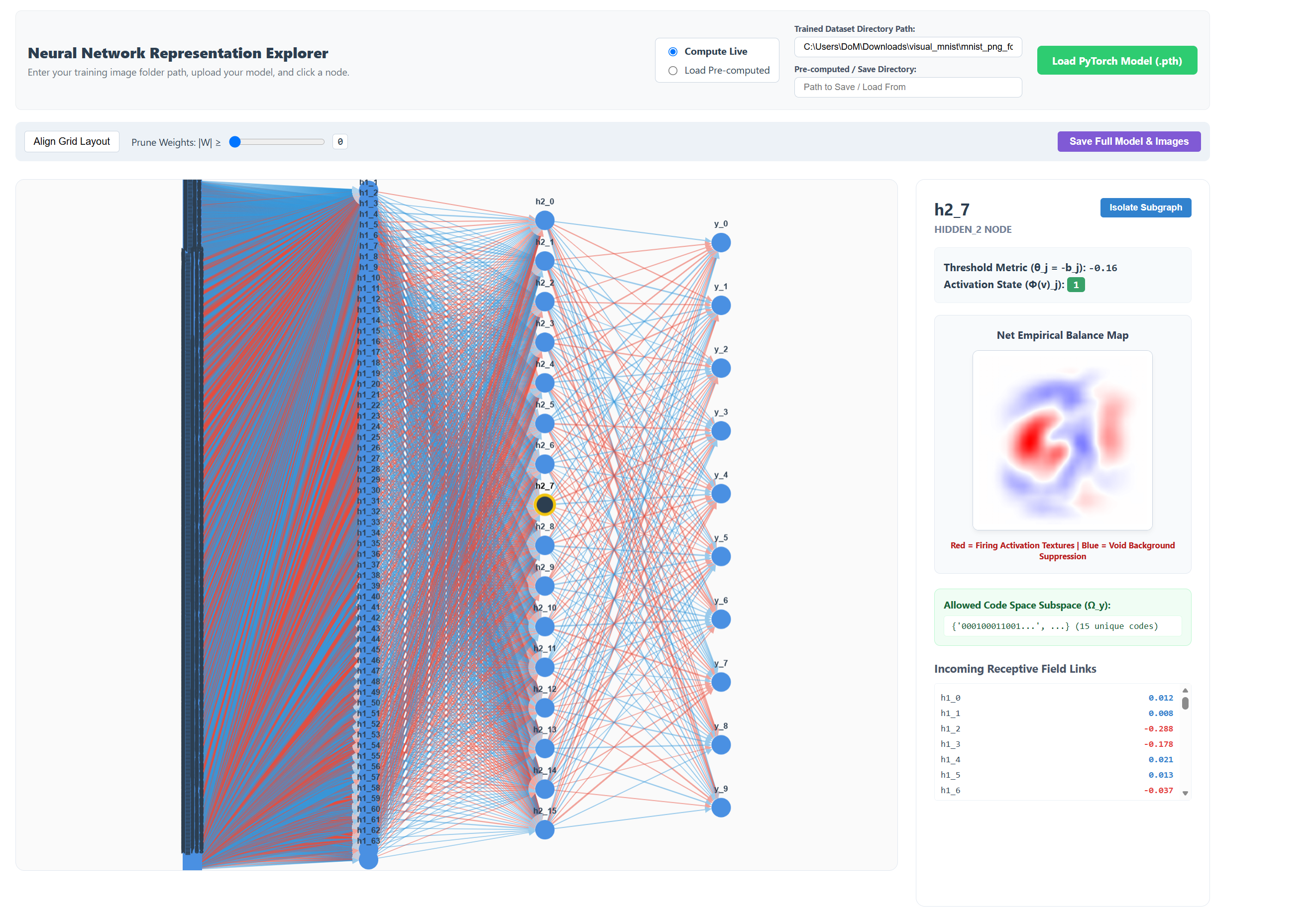}
    \caption{Graphical interface of the Neural Network Representation Explorer implementing the proposed framework. The interface visualizes network connectivity, activation states, threshold-map information, allowed code spaces, and class-specific feature representations.}
    \label{fig:tool_overview}
    \end{figure*}
    
    An important component of the framework is the construction of feature balance maps. For a selected neuron or output class, the tool computes the average activation pattern of the corresponding input samples and contrasts it with the average pattern of suppressed samples. The resulting visualization highlights image regions that contribute positively or negatively to neuron activation. Consequently, the learned representations can be interpreted directly in the input domain.
    
    Figure~\ref{fig:digit_feature_maps} illustrates representative feature balance maps obtained for several output classes of the MNIST dataset. Although the network is trained solely for classification, the generated visualizations reveal class-dependent structural patterns learned by the hidden layers. Distinct activation regions can be observed for different digit classes, indicating that the neural-code representation preserves meaningful information regarding the internal organization of the network.
    
    \begin{figure}[H]
        \centering
        \includegraphics[width=14cm]{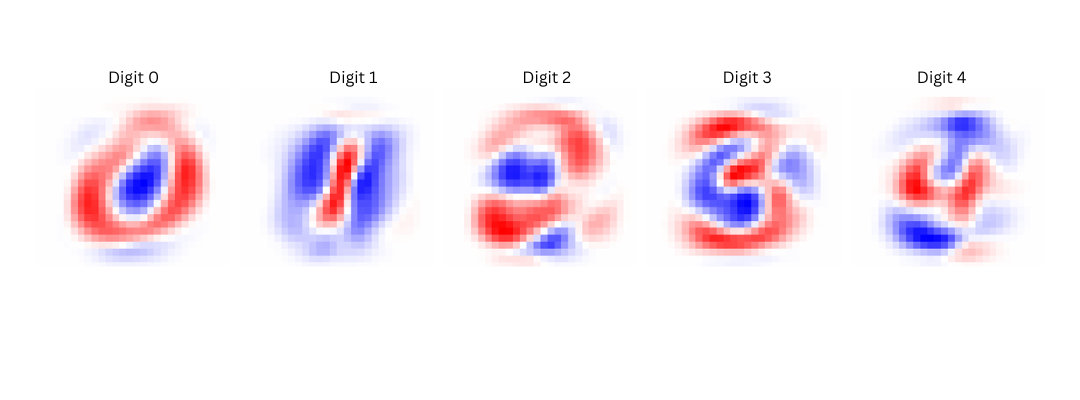}
        \caption{Feature balance maps generated using the proposed framework for different digit classes. Red regions indicate activation-supporting structures, whereas blue regions correspond to suppressive regions. The visualizations provide an interpretable description of the representations learned by the network.}
        \label{fig:digit_feature_maps}
    \end{figure}
    
    The developed software serves as an implementation of the theoretical framework proposed in this paper. Beyond classification analysis, it enables interactive exploration of neural codes, activation patterns, class-specific code spaces, and learned representations. This demonstrates that the proposed algebraic formulation can be translated into a practical tool for investigating the behaviour of trained neural networks.
    
    However, this approach requires a significant amount of computational power to represent larger neural networks.

    \section{Conclusion}
    
    This work introduced an algebraic framework for analyzing neural network representations using neural codes and neural ideals. By constructing class-specific allowed code spaces from hidden-layer activation patterns, we established a direct connection between neural network classification and Boolean polynomial relations. The proposed framework provides a mathematical description of the decision-making process while simultaneously enabling the interpretation of hidden representations through neural codes.
    
    In addition to the theoretical results, we developed algorithms for threshold-map construction, neural-code generation, allowed code-space computation, and neural-ideal derivation. A stabilization result was established to guarantee the invariance of neural ideals obtained from sufficiently large collections of training samples. The XOR and MNIST experiments demonstrated that the framework can capture both classification behaviour and feature-level representations in a unified algebraic setting.
    
    To facilitate practical exploration, an interactive software framework was developed for visualizing network connectivity, activation patterns, allowed code spaces, neural ideals, and feature balance maps. The software illustrates how the proposed mathematical framework can be translated into an interpretable analysis tool for trained neural networks. The presented results indicate that neural ideals offer a promising direction for studying representation learning, interpretability, and the internal structure of neural network models.
    
    \section*{Data availability}
    
    The data used in this study are publicly available dataset. The implementation code and experiment scripts are maintained in a GitHub repository at \url{ https://github.com/yvs1967/neural-network-representation-explorer/}. 

    \bibliography{ref}

\end{document}